\definecolor{mydarkblue}{rgb}{0,0.08,0.45}
\newcommand{\Hcal}{\mathcal{H}}
\newcommand{\posterior}{\rho}
\newcommand{\prior}{\pi}
\newcommand{\xbf}{x}
\newcommand{\xb}{\xbf}
\newcommand{\ex}{(\xbf,y)}
\newcommand{\Y}{Y}
\newcommand{\I}{\mathrm{\bf I}}
\newcommand{\sign}{\operatorname{sign}}
\newcommand{\R}{\mathbb{R}}
\newcommand{\Risk}{\mathbf{R}}
\newcommand{\RP}{\Risk_{D}}
\newcommand{\BQ}{B_{\posterior}}
\DeclareMathOperator*{\Prob}{\mathrm{\mathbf{Pr}}}
\DeclareMathOperator*{\Esp}{\mathrm{\mathbf{E}}}
\DeclareMathOperator*{\Var}{\mathrm{\mathbf{Var}}}
\DeclareMathOperator*{\argmin}{\mathrm{argmin}}
\DeclareMathOperator*{\argmax}{\mathrm{argmax}}
\newcommand{\eqdef}{=}
\newcommand{\gc}[2]{\Esp_{h\sim \posterior}\I(h(#1)=#2)}
\newcommand{\momentone}{\mu_1}
\newcommand{\momenttwo}{\mu_2}
\newcommand{\MQ}{M_\posterior}
\newcommand{\MQP}{M_\posterior^D}
\newcommand{\SQ}{S_{\posterior, c}}
\newcommand{\SQP}{S_{\posterior,c}^D}
\newcommand{\OQ}{M_{\posterior,\omega}}
\newcommand{\OQtwo}{M_{\posterior,2}}
\newcommand{\OQQ}{M_{\posterior,Q}}
\newcommand{\OQP}{M_{\posterior,\omega}^D}
\newcommand{\Cbound}{\mbox{$\mathcal{C}$-bound}\xspace}
\newcommand{\Cbounds}{\mbox{$\mathcal{C}$-bounds}\xspace}
\newtheorem{definition}{Definition}
\newtheorem{theorem}{Theorem}
\title{On Generalizing the C-Bound to the Multiclass and Multi-label~Settings}
\author{
Fran\c cois Laviolette\\
IFT-GLO, Universit\'e Laval\\
Qu\'ebec (QC), Canada \\
\texttt{\small francois.laviolette@ift.ulaval.ca} 
\And
Emilie Morvant \\
Univ. de St-Etienne, France\\
LaHC, UMR CNRS 5516\\
\texttt{\small emilie.morvant@univ-st-etienne.fr}
\And
Liva Ralaivola \\
Aix-Marseille Universit\'e\\
LIF, UMR CNRS 7279\\
\texttt{\small liva.ralaivola@lif.univ-mrs.fr} \\
\And
Jean-Francis Roy \\
IFT-GLO, Universit\'e Laval\\
Qu\'ebec (QC), Canada \\
\texttt{\small jean-francis.roy@ift.ulaval.ca} 
}
\begin{document}

\maketitle

\begin{abstract}
	The \Cbound, introduced in Lacasse et al.~\cite{Lacasse07}, gives a tight upper bound on the risk of a binary majority vote classifier. In this work, we present a first step towards extending this work to more complex outputs, by providing generalizations of the \Cbound to the multiclass and multi-label settings.
\end{abstract}

\section{Introduction}

In binary classification, many state-of-the-art algorithms output prediction functions that can be seen as a majority vote of ``simple'' classifiers. Ensemble methods such as Bagging \citep{b-96}, Boosting \citep{schapire99} and Random Forests \citep{randomforests} are well-known examples
of  learning algorithms that output majority votes. Majority votes are also central in the Bayesian approach~(see Gelman et al. \cite{gelman2004bayesian} for an introductory text);   in this setting, the majority vote is generally called the \emph{Bayes Classifier}.  
It is also -interesting to point out that classifiers produced by kernel methods, such as the Support Vector Machine \cite{DBLP:journals/ml/CortesV95}, can also be viewed as majority votes. Indeed, to classify an example~$x$, the SVM classifier computes 
\ $\sign \left(\sum_{i=1}^{|S|} \alpha_i\, y_i\, k(\xb_i,\xb)\right)$.
%$\sgn \bigg(\mbox{\footnotesize$\dsum_{i=1}^{|S|}$} \alpha_i\, y_i\, k(\xb_i,\xb)\bigg)$.
%\begin{equation} \label{eq:svm_intro}
%%g(\xb) & =& 
%\sgn \bigg(\mbox{\footnotesize$\dsum_{i=1}^{|S|}$} \alpha_i\, y_i\, k(\xb_i,\xb)\bigg).
%\end{equation}
Hence, as for standard binary majority votes, if the total weight of each \,$\alpha_i \,y_i\,k(\xb_i,\xb)$\, that votes positive is larger than the total weight for the negative choice, the classifier will output a $+1$ label (and a $-1$ label in the opposite case). 

Most bounds on majority votes take into account the margin of the majority vote  on an example $(x,y)$, that is the difference between the total vote weight that has been given to the winning class minus the weight given to the alternative class. As an example, PAC-Bayesian bounds give bounds on majority votes classifiers by relating it to a stochastic classifier, called the \emph{Gibbs} classifier which is, up to a linear transformation equivalent to the first statistical moment of the margin when $(x,y)$ is drawn {\it i.i.d.\ }from a distribution \cite{MinCq}. Unfortunately, in most ensemble methods, the voters are weak and no majority vote can obtain high margins. 
Lacasse et al. \cite{Lacasse07} proposed a tighter relation between the risk of the majority vote that take into account both the first and the second moments of the margin: the \Cbound. This sheds a new light on the behavior of majority votes: it is not only how good are the voters but also how they are correlated in their voting. Namely, this has inspired a new learning algorithm named MinCq \cite{MinCq}, whose performance is state-of-the-art. In this work, we generalize the \Cbound for multiclass and multi-label weighted majority votes as a first step towards the goal of designing learning algorithms for more complex outputs.

This paper is organized as follows.
Section~\ref{sec:cborne} recalls the \Cbound in binary classification. We generalize it to the multiclass and multi-label settings in Sections~\ref{sec:cborne_multi} and~\ref{sec:cborne_multilabel}. We conclude in Section~\ref{sec:conclu}.

\section{The \Cbound for Binary Classification}
\label{sec:cborne}

In this section, we recall the \Cbound \cite{Lacasse07,MinCq} in the binary classification setting.

Let $X\!\subseteq\! \R^d$ be the input space of dimension $d$, and let  $Y = \{-1,+1\}$ be the output space. 
The learning sample $S\!=\!\{(\xbf_i,y_i)\}_{i=1}^{m}$ is constituted by $m$ examples drawn {\it i.i.d.\ }from a fixed but unknown distribution $D$ over $X\!\times\! Y$.
Let $\Hcal$ be a set of real-valued voters from  $X$ to $Y$. 
Given a prior distribution $\prior$ on $\Hcal$ and given  $S$, %the objective in 
the goal of 
the PAC-Bayesian approach is to find  
the posterior distribution $\posterior$ on $\Hcal$ which minimizes the true risk of the $\posterior$-weighted majority vote $\BQ(\cdot)$ given by\\
%\begin{equation*}
\centerline{$\displaystyle \RP(\BQ) =  \Esp_{(\xbf,y)\sim D} \I \left(\BQ(\xbf)\ne y\right)\,, \quad\text{where}\quad \BQ(\xbf)=\sign\left[\Esp_{h\sim \posterior} h(\xbf)\right]\,,$}\\
%\end{equation*}
and where $\I(a) = 1$ if predicate $a$ is true and $0$ otherwise.

It is well-know that minimizing  $\RP(\BQ)$ is NP-hard. 
To get around this problem, one solution is to make use of the \Cbound which is a tight bound over $\RP(\BQ)$.
This bound is based on the notion of margin of $\BQ(\cdot)$ defined as follows.
\begin{definition}[the margin] \label{def:marge}
	Let $\MQP$ be the random variable that, given an example $\ex$ drawn according to $D$, outputs the margin of $\BQ(\cdot)$ on that example, defined by $
	%\begin{align*}
	\MQ (\xbf,y) = y \Esp_{h\sim \posterior} h(\xbf). 
	%\end{align*}
	$
\end{definition}
We then consider the first and second statistical moments of the random variable $\MQP$, respectively given by $\momentone(\MQP) = \Esp_{(\xbf,y)\sim D} \MQ(\xbf,y)$ and $\momenttwo(\MQP) = \Esp_{(\xbf,y)\sim D} \left(\MQ(\xbf,y)\right)^2$.

According to the definition of the margin, $\BQ(\cdot)$ correctly classifies an example $\ex$ when its margin is strictly positive, i.e. $\RP(\BQ) = \Prob_{(\xbf,y)\sim D} \left(\MQ (\xbf,y) \leq 0 \right)$. This equality makes it possible to prove the following theorem.

\begin{theorem}[The \Cbound of Laviolette et al. \cite{MinCq}]
	\label{theo:Cbound_bin}
	For every distribution $\posterior$ on a set of real-valued functions  $\Hcal$, and for every distribution $D$ on $X\times Y$, if  $\momentone(\MQP) > 0$, then we have
	\begin{equation*}
	\RP(\BQ)\ \leq\ 1- \frac{\left(\momentone(\MQP)\right)^2}{\momenttwo(\MQP)}\,.
	\end{equation*}
\end{theorem}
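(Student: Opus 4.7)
The plan is to recognize that the statement is essentially an application of the one-sided Chebyshev inequality (Cantelli's inequality) to the margin random variable $\MQP$. Since the statement rewrites $\RP(\BQ)$ as $\Prob_{(x,y)\sim D}(\MQ(x,y) \leq 0)$, and we assume $\momentone(\MQP) > 0$, the event $\{\MQP \leq 0\}$ is precisely the event that $\MQP$ deviates from its (positive) mean by at least $\momentone(\MQP)$ in the negative direction. That is exactly the situation Cantelli's inequality controls.

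Concretely, I would proceed in three steps. First, rewrite
\begin{equation*}
\RP(\BQ) \;=\; \Prob_{(x,y)\sim D}\bigl(\MQ(x,y) \leq 0\bigr) \;=\; \Prob\bigl(\momentone(\MQP) - \MQP \geq \momentone(\MQP)\bigr).
\end{equation*}
Second, apply Cantelli's inequality: for any real random variable $Z$ with finite variance $\sigma^2$ and mean $m$, and for every $a>0$, one has $\Prob(Z - m \leq -a) \leq \sigma^2/(\sigma^2 + a^2)$. Taking $Z=\MQP$, $m=\momentone(\MQP)$, and $a=\momentone(\MQP)$ yields
\begin{equation*}
\RP(\BQ) \;\leq\; \frac{\Var(\MQP)}{\Var(\MQP) + \bigl(\momentone(\MQP)\bigr)^2}.
\end{equation*}
Third, substitute $\Var(\MQP) = \momenttwo(\MQP) - \bigl(\momentone(\MQP)\bigr)^2$ in both numerator and denominator. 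The denominator collapses to $\momenttwo(\MQP)$, and the resulting fraction simplifies to $1 - (\momentone(\MQP))^2/\momenttwo(\MQP)$, which is the \Cbound.

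The only genuine content is Cantelli's inequality itself, so if one wants a self-contained proof, the place where any real work happens is re-deriving it: for every $t > 0$,
\begin{equation*}
\Prob(\MQP \leq 0) \;\leq\; \Prob\bigl((\momentone(\MQP) - \MQP + t)^2 \geq (\momentone(\MQP)+t)^2\bigr) \;\leq\; \frac{\Esp\bigl[(\momentone(\MQP) - \MQP + t)^2\bigr]}{(\momentone(\MQP)+t)^2},
\end{equation*}
by Markov. The numerator equals $\Var(\MQP) + t^2$, and one optimizes over $t > 0$. Setting the derivative to zero gives $t = \Var(\MQP)/\momentone(\MQP)$, at which point the bound becomes $\Var(\MQP)/(\Var(\MQP)+(\momentone(\MQP))^2)$, recovering the inequality above. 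I expect no real obstacle; the only subtlety is ensuring the assumption $\momentone(\MQP) > 0$ is used (it guarantees $a > 0$, so the one-sided bound is nontrivial, and it guarantees the optimizing $t$ is positive).
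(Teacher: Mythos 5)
Your proposal is correct and follows exactly the paper's own argument: apply the Cantelli--Chebyshev inequality with $Z = \MQP$ and $a = \momentone(\MQP)$, then use $\Var(\MQP) = \momenttwo(\MQP) - (\momentone(\MQP))^2$ to simplify the bound. Your additional self-contained derivation of Cantelli's inequality via Markov and optimization over $t$ goes beyond the paper (which cites the inequality as known) but introduces no divergence in approach.
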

\begin{proof}The Cantelli-Chebyshev inequality states that any random variable $Z$ and any $a>0$, we have that $\Prob \left(Z\leq \Esp_{}\left[Z\right] - a\right)\ \leq\ \frac{\Var Z}{\Var Z + a^2}$. We obtain the result by applying this inequality with $Z = \MQ(\xbf,y)$, and with $a = \momentone(\MQP)$, and by using the definition of the variance.
%	According to Definition~\ref{def:marge}, we have:
%	$$\var{(\xbf,y)\sim D} \MQ(\xbf,y) = \momenttwo(\MQP) - \left(\momentone(\MQP)\right)^2.$$ 
%	Then:
%	\begin{align*}
%	\RP(\BQ) & = \Prob_{(\xbf,y)\sim D} \left( \MQ (\xbf,y) \leq 0 \right)
%	&\ \leq\  \frac{\vardevant{(\xbf,y)\sim D} \MQ(\xbf,y) }{\vardevant{(\xbf,y)\sim D} \MQ(\xbf,y) + \left(\MQP\right)^2 } %\\
%	%&\ = \ \frac{\vardevant{(\xbf,y)\sim D} \MQ (\xbf,y) }{\Esp_devant{(\xbf,y)\sim D} \left( \MQ(\xbf,y) \right)^2 }\\
%	&  =  1- \frac{\left(\momentone(\MQP)\right)^2}{\momenttwo(\MQP)}.
%	\end{align*}
\end{proof}
Note that the minimization of the empirical counterpart of the \Cbound is a natural solution for learning a distribution $\posterior$ that leads to a $\posterior$-weighted majority vote $\BQ(\cdot)$ with low error.
This strategy is justified thanks to an elegant PAC-Bayesian generalization bound over the \Cbound, and have led to a simple learning algorithm called MinCq \cite{MinCq}.

In the following, we generalize this important theoretical result in the PAC-Bayesian literature to the multiclass setting.

\section{Generalizations of the \Cbound for Multiclass Classification}
\label{sec:cborne_multi}
In this section, we stand in the multiclass classification setting where the input space is still $X\! \subseteq\! \R^d$, but the output space is $Y\! =\! \{1,\ldots,Q\}$, with a finite number of classes $Q\!\geq\! 2$.
Let $\Hcal$ be a set of multiclass voters from  $X$ to $Y$. 
We recall that given a prior distribution $\prior$ over $\Hcal$ and given a  sample $S$, {\it i.i.d.\ }from $D$, the PAC-Bayesian approach looks for the $\posterior$ distribution which minimizes the true risk of the %$\posterior$-weighted
 majority vote $\BQ(\cdot)$.
In the multiclass classification setting, $\BQ(\cdot)$ is defined by
\begin{align}
\label{eq:bayes_multiclasse}
B_{\posterior}(\xbf) = \argmax_{c\in Y}\left[\Esp_{h\sim \posterior}\I(h(\xbf)=c)\right].
\end{align}

As in binary classification, the risk $\RP(\BQ)$ of a $\posterior$-weighted majority vote can be related to the notion of margin realized on an example $(\xbf,y)$.
However, in multiclass classification, such a notion can be expressed in a variety of manners.
In the next section, we present three versions of multiclass margins that are equivalent in binary classification.

\subsection{Margins in Multiclass Classification}

We first make use of the multiclass margin proposed by Breiman~\cite{randomforests} for the random forests, which can be seen as the usual notion of margin. Note that when $Y=\{-1,+1\}$, we recover the usual notion of binary margin of Definition~\ref{def:marge}.
\begin{definition}[the multiclass margin]\rm
	\label{def:posterior_marge_multiclasse}
	Let $D$ be a distribution over $X\times Y$, let $\Hcal$ be a set of multiclass voters. Given a distribution $\posterior$ on $\Hcal$, the margin of the majority vote $\BQ(\cdot)$ realized on  $(\xbf,y)\!\sim\! P$ is
	\vspace{-1.5mm}
	\begin{align*}
	\MQ(\xbf,y)  \eqdef  \gc{\xbf}{y}  -  \max_{c\in\Y, c\neq y} \left(\gc{\xbf}{c}\right).
	\end{align*}
\end{definition}
\vspace{-2mm}
Like in the binary classification framework presented in Section~\ref{sec:cborne}, the majority vote $\BQ(\cdot)$ correctly classifies an example if its $\posterior$-margin is strictly positive, \emph{i.e.,} \ $\RP(\BQ) = \Prob_{\ex\sim D}  \left(\MQ\ex \leq 0\right)$.
%\begin{align}
%\label{eq:marge_positive}
%\RP(\BQ) = \Prob_{\ex\sim D}  \left(\MQ\ex \leq 0\right).
%\end{align}

We then consider the relaxation proposed by Breiman~\cite{randomforests} that is based on a notion of \emph{strength} of the majority vote in regard to a class $c$.
\begin{definition}[the strength]
	\label{def:force}
	Let $\Hcal$ be a set of multiclass voters from $X$ to $Y$ and let $\posterior$ be a distribution on $\Hcal$. Let $\SQP$ be the random variable that, given an example $\ex$ drawn according to a distribution $D$ over $X\times Y$, outputs the strength of the majority vote $\BQ(\cdot)$ on that example according to a class $c\in Y$, defined by \ $\SQ(\xbf, y) \eqdef  \gc{\xbf}{y} - \gc{\xb}{c}$.
%	\begin{align*}
%	\SQ(\xbf, y) &\eqdef  \gc{\xbf}{y} - \gc{\xb}{c}.
%	\end{align*}
\end{definition} 
\noindent From this definition, one can show that
\vspace{-4mm}
%\allowdisplaybreaks[4]
\begin{align}
%\nonumber 
\RP(\BQ) =&\ \Prob_{\ex\sim D}  \left(\MQ\ex \leq 0\right) %\\
%\nonumber =&\ \Prob_{(\xb,y)\sim D}  \Big(\exists c\in Y, c\ne y : \gc{\xbf}{y}   \leq \gc{\xbf}{c} \Big) \\
%\nonumber =&\ \Prob_{(\xb,y)\sim D} \left(\bigvee_{c=1,\ c\ne y}^Q   \gc{\xbf}{y}   \leq \gc{\xbf}{c}  \right)\\
%\nonumber =&\ \Prob_{(\xb,y)\sim D} \left(\bigvee_{c=1}^Q  \left[ \gc{\xbf}{y}   \leq \gc{\xbf}{c}   \wedge c\ne y \right] \right)\\
%\nonumber 
\ \leq \
%&\ \sum_{c=1}^Q \Prob_{(\xb,y)\sim D} \Big( \gc{\xbf}{y}   \leq \gc{\xbf}{c}   \wedge c\ne y \Big)\\
%\nonumber =&\ \sum_{c=1}^Q \Bigg[ \Prob_{(\xb,y)\sim D} \Big( \gc{\xbf}{y}   \leq \gc{\xbf}{c}\Big) + \Prob_{(\xb,y)\sim D} \Big( c\ne y \Big) \\ \nonumber  & {} \qquad\quad - %\Prob_{(\xb,y)\sim D} \Big( \gc{\xbf}{y}   \leq \gc{\xbf}{c}   \vee c\ne y \Big) \Bigg]\\
%\label{eq:erreur} =&\ \sum_{c=1}^Q \Bigg[ \Prob_{(\xb,y)\sim D} \Big( \gc{\xbf}{y}   \leq \gc{\xbf}{c}\Big) + \Prob_{(\xb,y)\sim D} \Big( c\ne y \Big) -1\Bigg]\\
%\nonumber =&\ \sum_{c=1}^Q \Prob_{(\xb,y)\sim D} \Big( \gc{\xbf}{y}   \leq \gc{\xbf}{c}\Big) + \sum_{c=1}^Q \Prob_{(\xb,y)\sim D} \Big( c\ne y \Big) - \sum_{c=1}^Q 1\\
%\nonumber =&\ \sum_{c=1}^Q \Prob_{(\xb,y)\sim D} \Big( \gc{\xbf}{y}   \leq \gc{\xbf}{c}\Big) + (Q - 1) - Q\\
%\nonumber =&\ \sum_{c=1}^Q \Prob_{(\xb,y)\sim D} \Big( \gc{\xbf}{y}   \leq \gc{\xbf}{c}\Big) - 1\\
\label{eq:aaa}  
%=&\ 
\sum_{c=1}^Q \Prob_{(\xb,y)\sim D} \left(\SQ(\xbf, y)  \leq 0 \right)  - 1\,,
\end{align}
where we have the equality in the binary classification setting. Lastly, we consider a relaxation of the notion of margin, that we call the \emph{$\omega$-margin}.
\begin{definition}[the $\omega$-margin]
	\label{def:gammaloss}
	Let $\Hcal$ be a set of multiclass voters from $X$ to $Y$, let $\posterior$ be a distribution on $\Hcal$ and let $\omega \leq 1$. Let $\OQP$ be the random variable that, given an example $\ex\sim D$ over $X\times Y$, outputs the $\omega$-margin of the majority vote $\BQ(\cdot)$ on that example, defined by
	\begin{equation}
	\label{eq:gammaloss}
	\OQ(\xbf, y) \eqdef \gc{\xbf}{y} - 1/\omega\,.%\frac{1}{\omega}.
	\end{equation}
\end{definition}

This notion of margin can be seen as the difference between the weight given by the majority vote to the correct class $y$ and a certain threshold $1/ \omega$. In the case of the binary classification, we have that the sign of the $\omega$-margin with $\omega = 2$ is the same than the sign of the binary margin. This observation comes from the fact that $\gc{\xbf}{y}$ is the proportion of voters that vote $y$. In the binary case, this proportion is $\leq \frac{1}{2}$ when the majority vote makes a mistake, and $> \frac{1}{2}$ otherwise. The following theorem relates the risk of $\BQ(\cdot)$ and the  $\omega$-margin associated to $\posterior$.
\begin{theorem}
	\label{theo:link}
	Let $Q\geq 2$ be the number of classes. For every distribution $D$ over $X\times Y$ and for every distribution $\posterior$ over a set of multiclass voters $\Hcal$, we have
	\begin{equation}
	\label{eq:link}
	\Prob_{\ex\sim D}\left(\OQQ\ex \leq 0\right)\quad \leq\quad  \RP(\BQ) \quad  \leq\quad \Prob_{\ex\sim D}\left(\OQtwo\ex \leq 0\right).
	\end{equation}
\end{theorem}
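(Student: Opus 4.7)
The plan is to prove the two inequalities separately by pointwise comparisons on $(x,y)$: in each case, show that one event is contained in the other, then take probabilities. Throughout, abbreviate $p_c(x) := \gc{x}{c}$, so that $\sum_{c\in Y} p_c(x) = 1$, $M_{\posterior,\omega}(x,y) = p_y(x) - 1/\omega$, and $\RP(\BQ) = \Prob_{(x,y)\sim D}(p_y(x) \leq \max_{c\neq y}p_c(x))$ by Definition~\ref{def:posterior_marge_multiclasse}.

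For the upper bound $\RP(\BQ) \leq \Prob(M_{\posterior,2}(x,y)\leq 0)$, I would show the pointwise implication
\[
\{(x,y) : \BQ \text{ errs}\} \;\subseteq\; \{(x,y) : p_y(x) \leq 1/2\}.
\]
Indeed, if $\BQ$ errs on $(x,y)$, there is some $c\neq y$ with $p_c(x) \geq p_y(x)$. Since $p_c(x) + p_y(x) \leq \sum_{c'\in Y} p_{c'}(x) = 1$, we obtain $2p_y(x) \leq p_c(x) + p_y(x) \leq 1$, hence $p_y(x) \leq 1/2$, i.e.\ $M_{\posterior,2}(x,y) \leq 0$. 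Taking probabilities under $D$ gives the right inequality of \eqref{eq:link}.

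For the lower bound $\Prob(M_{\posterior,Q}(x,y)\leq 0) \leq \RP(\BQ)$, I would establish the reverse containment
\[
\{(x,y) : p_y(x) \leq 1/Q\} \;\subseteq\; \{(x,y) : \BQ \text{ errs}\}.
\]
Suppose $p_y(x) \leq 1/Q$. Then $\sum_{c\neq y} p_c(x) = 1 - p_y(x) \geq (Q-1)/Q$, so the average of $p_c(x)$ over the $Q-1$ classes $c\neq y$ is at least $1/Q \geq p_y(x)$; by an averaging argument, some $c\neq y$ satisfies $p_c(x) \geq p_y(x)$, which means $\max_{c\neq y} p_c(x) \geq p_y(x)$, so $M_\posterior(x,y) \leq 0$ and $\BQ$ errs on $(x,y)$. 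Taking probabilities yields the left inequality of \eqref{eq:link}.

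Neither step is a real obstacle; the only subtlety is keeping the strict vs.\ non-strict inequalities consistent with the convention that $M_\posterior(x,y) \leq 0$ counts as an error (so ties are treated as misclassifications on both sides). The key mechanical ingredients are simply the unit-sum constraint $\sum_c p_c(x) = 1$ and a two-line averaging/pigeonhole argument, applied once for the threshold $1/2$ and once for the threshold $1/Q$.
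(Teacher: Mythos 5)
Your proposal is correct and is essentially the paper's own argument: your averaging/pigeonhole step for the threshold $1/Q$ is precisely the paper's replacement of $\max_{c\neq y}$ by the expectation over $c\neq y$ (using that the class weights sum to one), and your containment for the threshold $1/2$ is just the contrapositive of the paper's remark that weight greater than $\frac{1}{2}$ on the true class forces a correct prediction. Your explicit note on the tie convention (counting $M_\posterior(\xbf,y)\leq 0$ as an error on both sides) matches the convention the paper adopts, so nothing further is needed.
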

\begin{proof}
	First, let us prove the left-hand side inequality. We have
	\begin{small}
	\begin{align*}
	\RP(B_\posterior)  &= \Prob_{(\xbf,y)\sim D} \left(\MQ\ex \leq 0\right)
	= \Prob_{(\xbf,y)\sim D} \left( \gc{\xbf}{y} \leq \max_{c\in\Y, c\ne y} \gc{\xbf}{c}  \right)\\
	&\geq \Prob_{(\xbf,y)\sim D} \left( \gc{\xbf}{y} \leq \Esp_{c\in\Y, c\ne y} \gc{\xbf}{c}  \right)\\
%    &=  \Prob_{(\xbf,y)\sim D} \left( \gc{\xbf}{y} \leq \frac{1}{Q-1}\sum_{c=1, c\neq y}^Q \gc{\xbf}{c}  \right)\\
    &=  \Prob_{(\xbf,y)\sim D} \left( \gc{\xbf}{y} \leq \frac{1}{Q-1}\left[1-\gc{\xbf}{y}\right]  \right)
%	&=  \Prob_{(\xbf,y)\sim D} \left(  \gc{\xbf}{y}  - \frac{1}{Q} \leq 0\right)\\
	=  \Prob_{(\xbf,y)\sim D} \left( \OQQ\ex \leq 0\right)\,.
	\end{align*}
	\end{small}%
The right-hand side inequality is easily verified by observing that the majority vote necessarily makes a correct prediction if the weight given to the correct class $y$ is higher than $\frac{1}{2}$.
\end{proof}

All the above-mentioned notions of margin are equivalent if we stand in the binary classification setting.
However, they differ in the multiclass setting. The multiclass margin of Definition~\ref{def:posterior_marge_multiclasse} is associated to the true decision function in multiclass classification, and is calculated considering all other classes. The strength of Definition~\ref{def:force} depends on the true class $y$ of $\xbf$ and corresponds to a combination of binary margins (one class versus another class) for $c\ne y$. The $\omega$-margin of Definition~\ref{def:gammaloss} also depends on the true class $y$ of $\xbf$, but does not consider the other classes. This measure is easier to manipulate, but implies a higher indecision region (see Theorem~\ref{theo:link}).

\subsection{Generalizations of the \Cbound in the Multiclass Setting} 
The following bound is based on the definition of the multiclass margin in multiclass (Definition~\ref{def:posterior_marge_multiclasse}).
\begin{theorem}[the multiclass \Cbound]
	\label{theo:multibayes_C-bound} 
	For every distribution $\posterior$ on a set of multiclass voters $\Hcal$, and for every distribution $D$ on $X\times Y$, such that $\momentone(\MQP) > 0$, we have
	\begin{equation*}
	\label{eq:multibayes_theo-C-bound}
	\RP(\BQ)\ \leq\ 1 -  \frac{\left(\momentone(\MQP)\right)^2}{\momenttwo(\MQP)}\,.
	\end{equation*}
\end{theorem}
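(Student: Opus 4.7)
The plan is to recycle almost verbatim the proof of Theorem~\ref{theo:Cbound_bin}, since the multiclass margin of Definition~\ref{def:posterior_marge_multiclasse} is crafted precisely so that the event ``$B_{\posterior}$ misclassifies $(\xbf,y)$'' coincides with the event ``$\MQ(\xbf,y) \leq 0$''. Once we have this characterization, the choice of the output space $Y$ becomes irrelevant and the Cantelli-Chebyshev argument goes through without modification.

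Concretely, I would proceed in three steps. First, recall the observation stated right after Definition~\ref{def:posterior_marge_multiclasse}, namely
\begin{equation*}
\RP(\BQ) \;=\; \Prob_{(\xbf,y)\sim D}\!\left(\MQ(\xbf,y) \leq 0\right),
\end{equation*}
which follows from $\argmax_{c} \gc{\xbf}{c} \ne y$ if and only if $\gc{\xbf}{y} \leq \max_{c \ne y} \gc{\xbf}{c}$. Second, set $Z \eqdef \MQ(\xbf,y)$ viewed as a real-valued random variable on the probability space induced by $D$, and let $a \eqdef \momentone(\MQP) = \Esp Z$. By assumption $a > 0$, so $\{Z \leq 0\} = \{Z \leq \Esp Z - a\}$, and the one-sided Cantelli-Chebyshev inequality yields
\begin{equation*}
\RP(\BQ) \;=\; \Prob(Z \leq \Esp Z - a) \;\leq\; \frac{\Var Z}{\Var Z + a^2}.
\end{equation*}
Third, use the identity $\Var Z = \momenttwo(\MQP) - (\momentone(\MQP))^2$ together with $a^2 = (\momentone(\MQP))^2$ to see that the denominator telescopes to $\momenttwo(\MQP)$, giving the claimed bound $1 - (\momentone(\MQP))^2/\momenttwo(\MQP)$.

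There is essentially no obstacle: the whole content is that Definition~\ref{def:posterior_marge_multiclasse} was chosen to preserve the binary equivalence ``correct prediction $\Leftrightarrow$ positive margin''. The only minor point to verify is that $\MQ(\xbf,y)$ is square-integrable so that both moments are finite, but this is immediate since $\MQ$ is the difference of two $\posterior$-expectations of indicator functions and hence lies in $[-1,1]$.
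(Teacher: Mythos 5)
Your proposal is correct and follows exactly the paper's intended argument: the paper's own proof of Theorem~\ref{theo:multibayes_C-bound} simply invokes the binary proof of Theorem~\ref{theo:Cbound_bin} verbatim, relying on the fact that $\RP(\BQ) = \Prob_{(\xbf,y)\sim D}\left(\MQ(\xbf,y) \leq 0\right)$ holds for the multiclass margin of Definition~\ref{def:posterior_marge_multiclasse}, after which the Cantelli--Chebyshev step is unchanged. Your spelled-out version (the event identity, the choice $a = \momentone(\MQP)$, the variance identity, and the boundedness remark ensuring finite moments) is a faithful and slightly more explicit rendering of the same proof.
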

\begin{proof}The proof  is the same than the one of the binary \Cbound (see Theorem~\ref{theo:Cbound_bin}), by considering the multiclass majority-vote of Equation~\eqref{eq:bayes_multiclasse} and the multiclass margin of Definition~\ref{def:posterior_marge_multiclasse}.\end{proof}

This bound offers an accurate relation between the risk of the majority vote and the margin. However, the $\max$ term in the definition of the multiclass margin makes the derivation of an algorithm to minimize this bound much harder than in binary classification.% Indeed, the first two moments of the multiclass margin are given by
%\begin{alignat*}{2}
%\momentone(\MQP) \ &= \Esp_{(x,y)\sim D} \MQ(x, y)\ &&= \Esp_{(x,y)\sim D}\left[\gc{x}{y} - \max_{c\in\Y,c\neq y} \left(\gc{x}{c}\right)\right],\\
%\momenttwo(\MQP) \ &= \Esp_{(x,y)\sim D'} (\MQ(x, y))^2\ &&= \Esp_{(x,y)\sim D} \left[\gc{x}{y}  - \max_{c\in\Y,c\neq y} \left(\gc{x}{c}\right) \right]^2.
%\end{alignat*}

Thanks to the definition of the strength of Definition~\ref{def:force} and according to the proof process of the \Cbound, we obtain the following relation.
\begin{theorem}
	\label{theo:F-bound} 
	For every distribution $\posterior$ on a set of multiclass voters $\Hcal$, and for every distribution $D$ over $X\times Y$, such that $\forall c\in Y, \ \momentone(\SQP) > 0 $, we have
	\begin{equation*}
	\RP(\BQ)\quad \leq\quad \sum_{c=1}^Q \Prob_{(\xb,y)\sim D} \left( \SQ\ex  \leq 0 \right)-1\quad =\quad (Q - 1) - \sum_{c=1}^{Q}\frac{\left( \momentone(\SQP)\right)^2}{ \momenttwo(\SQP)}\,,
	\end{equation*}
\end{theorem}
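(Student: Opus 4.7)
The plan is to observe that this statement is essentially a direct combination of the inequality already derived in Equation~\eqref{eq:aaa} with a class-wise application of the Cantelli--Chebyshev inequality, exactly as was done in the proof of Theorem~\ref{theo:Cbound_bin}. The first inequality $\RP(\BQ) \leq \sum_{c=1}^Q \Prob_{(\xb,y)\sim D}(\SQ(\xbf,y) \leq 0) - 1$ is already given for free by Equation~\eqref{eq:aaa}, so I only need to handle the equality (more precisely, the second inequality) relating each term $\Prob(\SQP \leq 0)$ to the first two moments of $\SQP$.

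For each fixed class $c\in Y$, the strength $\SQP$ is a bona fide real-valued random variable induced by $(\xbf,y)\sim D$, with first moment $\momentone(\SQP)$ assumed strictly positive by hypothesis and second moment $\momenttwo(\SQP)$. Applying the Cantelli--Chebyshev inequality exactly as in Theorem~\ref{theo:Cbound_bin}, with $Z = \SQ(\xbf,y)$ and $a = \momentone(\SQP) > 0$, I get
\begin{equation*}
\Prob_{(\xbf,y)\sim D}\!\left(\SQ(\xbf,y) \leq 0\right) \ \leq\ \frac{\Var \SQP}{\Var \SQP + (\momentone(\SQP))^2} \ =\ 1 - \frac{(\momentone(\SQP))^2}{\momenttwo(\SQP)},
\end{equation*}
where the last equality uses $\Var \SQP + (\momentone(\SQP))^2 = \momenttwo(\SQP)$.

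Summing this bound over the $Q$ classes and subtracting~$1$ yields
\begin{equation*}
\sum_{c=1}^Q \Prob_{(\xb,y)\sim D}\!\left(\SQ(\xbf,y) \leq 0\right) - 1 \ \leq\ (Q-1) - \sum_{c=1}^Q \frac{(\momentone(\SQP))^2}{\momenttwo(\SQP)},
\end{equation*}
which, combined with Equation~\eqref{eq:aaa}, gives the announced bound on $\RP(\BQ)$.

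There is really no serious obstacle here: the structural work has been done in establishing inequality~\eqref{eq:aaa} (essentially a union-bound type argument turning the $\max$ in the multiclass margin into a sum over classes), and the rest is a per-class invocation of the same Cantelli--Chebyshev trick used in the binary case. The only conceptual point worth underlining is that the positivity hypothesis $\momentone(\SQP) > 0$ must hold for \emph{every} $c$, since Cantelli--Chebyshev in the form used requires a strictly positive deviation $a$; this is exactly what the theorem assumes.
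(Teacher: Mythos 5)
Your proof is correct and matches the paper's own argument, which is exactly the one-line combination of Inequality~\eqref{eq:aaa} with a per-class application of the Cantelli--Chebyshev step from Theorem~\ref{theo:Cbound_bin}. You even rightly flag the one subtlety the paper glosses over: the ``$=$'' in the theorem statement should really be a ``$\leq$'', since Cantelli--Chebyshev only yields an inequality for each term $\Prob_{(\xbf,y)\sim D}\left(\SQ\ex \leq 0\right)$.
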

\begin{proof}
	The result is obtained by using Inequality~\eqref{eq:aaa} in the proof of the \Cbound.
\end{proof}

This result can be seen as a sum of \Cbounds for every class. A practical drawbacks of this bound in order to construct a minimization algorithm is that we have to minimize a sum of ratios. Finally, the \Cbound obtained by using the $\omega$-margin is given by the following theorem.
\begin{theorem}\label{th:marge_omega}
	For every distribution $\posterior$ on a set of multiclass voters $\Hcal$, for every $\omega \geq 1$, and for every distribution $D$ on $X\times Y$, if  $\momentone(\OQP) > 0$, we have 
	\begin{equation*}
	\Esp_{\ex \sim D} \I\Big(\OQ\ex \leq 0\Big)  \leq\ 1 - \frac{\left(\momentone(\OQP)\right)^2}{\momenttwo(\OQP)}\,.
	\end{equation*}
\end{theorem}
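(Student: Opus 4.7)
The plan is to reproduce verbatim the Cantelli--Chebyshev argument used for the binary \Cbound (Theorem~\ref{theo:Cbound_bin}), but applied to the $\omega$-margin random variable $\OQP$ rather than to $\MQP$. The key observation is that the argument in Theorem~\ref{theo:Cbound_bin} never uses any specific algebraic property of the binary margin beyond the fact that one is controlling the probability that a real-valued random variable falls below $0$: it works for any random variable with a strictly positive mean and a finite second moment. The $\omega$-margin $\OQ(\xbf,y)=\gc{\xbf}{y}-1/\omega$ is bounded (since $\gc{\xbf}{y}\in[0,1]$), so both moments $\momentone(\OQP)$ and $\momenttwo(\OQP)$ are well defined and finite.

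Concretely, I would proceed in three short steps. First, rewrite the left-hand side as a probability using $\Esp_{\ex\sim D}\I(\OQ\ex\leq 0)=\Prob_{\ex\sim D}(\OQ\ex\leq 0)$. Second, apply the Cantelli--Chebyshev inequality as stated inside the proof of Theorem~\ref{theo:Cbound_bin} to the random variable $Z=\OQ(\xbf,y)$ with shift $a=\momentone(\OQP)=\Esp[Z]>0$, so that the event $\{Z\leq 0\}$ becomes the canonical event $\{Z\leq \Esp[Z]-a\}$; this yields
$$\Prob_{\ex\sim D}\bigl(\OQ\ex\leq 0\bigr)\ \leq\ \frac{\Var Z}{\Var Z+(\momentone(\OQP))^2}.$$
Third, substitute $\Var Z=\momenttwo(\OQP)-(\momentone(\OQP))^2$ from the definition of variance and simplify to obtain $1-(\momentone(\OQP))^2/\momenttwo(\OQP)$.

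There is no real obstacle here; the main conceptual point is simply to recognize that the bound is a statement about an arbitrary real random variable, so neither the interpretation of $\OQP$ as a margin nor the role of the threshold $1/\omega$ enters the derivation — only the positivity of the first moment (which is the stated hypothesis) and the finiteness of the second moment (which follows from boundedness) are needed. This is in contrast with Theorem~\ref{theo:F-bound}, whose proof required the extra union-style relaxation~\eqref{eq:aaa} before the Cantelli--Chebyshev step, and Theorem~\ref{theo:multibayes_C-bound}, whose difficulty lies in the $\max$ over classes inside $\MQ$; the $\omega$-margin version avoids both of these complications, which is precisely why its proof is the most immediate of the three multiclass generalizations.
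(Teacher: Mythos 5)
Your proposal is correct and follows exactly the paper's route: the paper's own proof is a one-liner stating that the result follows by the same Cantelli--Chebyshev argument as Theorem~\ref{theo:Cbound_bin} with $\MQP$ replaced by $\OQP$, which is precisely the substitution you carry out (your version merely makes explicit the steps the paper leaves implicit, including the rewriting of the expectation of the indicator as a probability and the finiteness of the moments). No gap, no divergence.
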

\begin{proof} The result is obtained with the same proof process than the \Cbound, by replacing the use of the random variable $\MQP$ by $\OQP$.
\end{proof}

The $\omega$-margin being linear, we are now able to build a bound minimization algorithm as in Laviolette et al.~\cite{MinCq} for the multiclass classification setting.

\section{Extending the $\omega$-margin to the Multi-label Setting}
\label{sec:cborne_multilabel}

In this section, we will extend the $\omega$-margin with $\omega = 2$ to the more general \emph{multi-label} classification setting. Doing so, we will be able to upper bound the risk of the multi-label majority vote classifier. We stand in the multi-label classification setting where the input space is still $X\! \subseteq\! \R^d$, the space of possible labels is $Y\! =\! \{1,\ldots,Q\}$ with a finite number of classes $Q\!\geq\! 2$, but we consider the output space $\overline{Y} = \{0, 1\}^Q$ that contains vectors $\overline{y}$ of length $Q$ where the $i^{\mbox{\tiny th}}$ element is $1$ if example~$i$ is among the labels associated to the example $x$, and $0$ otherwise. We consider a set $\overline{\Hcal}$ of \emph{multi-label voters} $\overline{h} : X \mapsto \overline{Y}$ .  As usual in structured output prediction, given a distribution $\posterior$ over $\overline{\Hcal}$, the multi-label majority vote classifier $\overline{B_\posterior}$ chooses the label $\overline{c}\in\overline{Y}$ that has the lowest squared Euclidean distance with the $\posterior$-weighted  cumulative confidence,
\begin{align*}
\overline{B_\posterior}(x)  \ \eqdef\  \argmin_{\overline{c}\in \overline{Y}}\left\|\overline{c} - \Esp_{\overline{h}\sim \posterior} \overline{h}(x)\right\|^2\ =\ \argmax_{\overline{c} \in \overline{Y}} \left[  \overline{c} \cdot \left( \Esp_{\overline{h}\sim \posterior} \overline{h}(x) - \frac{1}{2}\,\textbf{1} \right) \right]\,,
\end{align*}
where $\textbf{1}$ is a vector of length $Q$ containing ones. The multi-label margin is given by Definition~\ref{def:posterior_marge_multilabel}.
\begin{definition}[the multi-label margin]\rm
	\label{def:posterior_marge_multilabel}
	Let $D$ be a distribution over $X\times \overline{Y}$, let $\overline{\Hcal}$ be a set of multi-label voters. Given a distribution $\posterior$ on $\overline{\Hcal}$, the margin of the majority vote $\overline{\BQ}(\cdot)$ on  $(\xbf,\overline{y})$ is
	\begin{align*}
	\overline{\MQ}(\xbf,\overline{y})  \eqdef \left( \Esp_{\overline{h}\sim \posterior} \overline{h}(x) - \frac{1}{2}\,\textbf{1} \right) \cdot \left(\overline{y} - \frac{1}{2}\,\textbf{1}  \right)  -  \max_{\overline{c}\in\overline{Y}, \overline{c}\neq \overline{y}} \left( \Esp_{\overline{h}\sim \posterior} \overline{h}(x) - \frac{1}{2}\,\textbf{1} \right) \cdot \left(\overline{c} - \frac{1}{2}\,\textbf{1}  \right) \,.
	\end{align*}
\end{definition}

As we did in the multiclass setting with the margin of Definition~\ref{def:posterior_marge_multiclasse}, we can upper bound the risk of the multi-label majority vote classifier by developing a \Cbound using the margin of Definition~\ref{def:posterior_marge_multilabel}. However, as this margin also depends on a $\max$ term, the derivation of a learning algorithm minimizing the resulting \Cbound remains hard. To overcome this, we generalize $\OQtwo$, the $\omega$-margin with $\omega=2$  of Definition~\ref{def:gammaloss}, to the multi-label setting, as follows.
\begin{definition}\rm
	\label{def:posterior_omega_marge_multilabel}
	Let $D$ be a distribution over $X\times \overline{Y}$, let $\overline{\Hcal}$ be a set of multi-label voters. Given a distribution $\posterior$ on $\overline{\Hcal}$, the $2$-margin of the majority vote $\BQ(\cdot)$ on  $(\xbf,\overline{y})$ is
	\begin{eqnarray*}
	\overline{M_{\rho, 2}}(\xbf,\overline{y}) & \eqdef  & \left(\Big(\Esp_{\overline{h}\sim \posterior} \overline{h}(x) - \frac{1}{2}\, \textbf{1}\Big)-\Big(\overline{y}_{i\rightarrow {1}/{2}} -\frac{1}{2}\, \textbf{1}\Big)\right) \cdot \Big(\overline{y} -\frac{1}{2}\, \textbf{1}\Big)\\
	& \eqdef  &
	\overline{y} \cdot \left(\Esp_{\overline{h}\sim \posterior} \overline{h}(x) - \frac{1}{2}\, \textbf{1}\right) - \Esp_{\overline{h}\sim \posterior} \overline{h}(x) \cdot \frac{1}{2}\,\textbf{1} - \frac{1}{4}\,,
	\end{eqnarray*}
	where $i\in \{1,..,Q\}$ and $\overline{y}_{i\rightarrow {1}/{2}}$ is obtained from $\overline{y}$ by replacing its $i^{\mbox{\tiny th}}$ coordinate by~$1/2$.
\end{definition}
The second equality of the definition is obtained by straightforward calculation.
Now, let $P_{\overline{y}}$, be the only hyperplane on which lies all the points of the form $\overline{y}_{i\rightarrow {1}/{2}}$ for $i = 1,\ldots,Q$. Since this hyperplane has normal $\Big(\overline{y} -\frac{1}{2}\, \textbf{1}\Big)$, it follows from basic linear algebra that if $\overline{M_{\rho, 2}} >0$, then vectors $\Esp_{\overline{h}\sim \posterior} \overline{h}(x)$ and $\overline{y}$ will be on the same side of $P_{\overline{y}}$. It is also easy to see that in this case, we have $\overline{B_\posterior}(x)=\overline{y}$. Figure~\ref{fig:multilabel} shows an example in the case where $Q=2$. Thus, we  have that $ \RP(\BQ) \,  \leq\, \Prob_{\ex\sim D}\left(\overline{M_{\rho, 2}}(\xbf,\overline{y}) \leq 0\right)$, and following the same arguments as in Theorem~\ref{th:marge_omega}, one can derive the following  multi-label \Cbound.

\begin{theorem}
	For every distribution $\posterior$ on a set of multi-label voters $\overline{\Hcal}$ and for every distribution $D$ on $X\times \overline{Y}$, if  $\momentone(\overline{M_{\rho, 2}}(\xbf,\overline{y})) > 0$, we have 
	\begin{equation*}
	\RP(\overline{\BQ}) \quad \leq \quad \Esp_{(x, \overline{y}) \sim D} \I\Big(\overline{M_{\rho,2}}(x, \overline{y}) \leq 0\Big) \quad  \leq\quad 1 - \frac{\left(\momentone(\overline{M_{\rho,2}^D})\right)^2}{\momenttwo(\overline{M_{\rho,2}^D})}\,.
	\end{equation*}
\end{theorem}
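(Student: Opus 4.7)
The plan is to mirror the two-step structure suggested by the statement itself: first establish the left inequality by reducing classification error to the event that the $2$-margin is non-positive, and then obtain the right inequality by a straight application of the Cantelli-Chebyshev inequality exactly as in the proof of Theorem~\ref{theo:Cbound_bin} / Theorem~\ref{th:marge_omega}.

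For the first inequality, I would directly invoke the geometric observation made in the paragraph preceding the theorem. Namely, the hyperplane $P_{\overline{y}}$ passing through all the points $\overline{y}_{i\rightarrow 1/2}$ has normal vector $\overline{y}-\tfrac{1}{2}\mathbf{1}$, so the sign of $\overline{M_{\rho,2}}(x,\overline{y})$ coincides with whether $\Esp_{\overline{h}\sim\posterior}\overline{h}(x)$ lies on the same side of $P_{\overline{y}}$ as $\overline{y}$ itself. When this is the case, I would argue that $\overline{y}$ is the closest vertex of $\{0,1\}^Q$ to $\Esp_{\overline{h}\sim\posterior}\overline{h}(x)$, so $\overline{B_\posterior}(x)=\overline{y}$ and no error is made. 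Contrapositively, an error requires $\overline{M_{\rho,2}}(x,\overline{y})\leq 0$, which yields
\begin{equation*}
\RP(\overline{\BQ}) \;=\; \Prob_{(x,\overline{y})\sim D}\left(\overline{B_\posterior}(x)\neq \overline{y}\right) \;\leq\; \Prob_{(x,\overline{y})\sim D}\left(\overline{M_{\rho,2}}(x,\overline{y})\leq 0\right).
\end{equation*}

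For the second inequality, I would apply Cantelli-Chebyshev to the real-valued random variable $Z=\overline{M_{\rho,2}^D}$ with $a=\momentone(\overline{M_{\rho,2}^D})>0$, giving
\begin{equation*}
\Prob\!\left(Z\leq 0\right) \;=\; \Prob\!\left(Z\leq \Esp Z - a\right) \;\leq\; \frac{\Var Z}{\Var Z + a^2}.
\end{equation*}
Writing $\Var Z = \momenttwo(\overline{M_{\rho,2}^D}) - a^2$ and simplifying the resulting fraction produces $1 - a^2/\momenttwo(\overline{M_{\rho,2}^D})$, which is exactly the right-hand side of the claim.

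The only genuinely non-routine step is the geometric argument justifying that $\overline{M_{\rho,2}}(x,\overline{y})>0$ forces $\overline{B_\posterior}(x)=\overline{y}$; the rest is a verbatim repetition of the binary \Cbound proof. I would therefore spend most of the writing effort there, verifying carefully that among all vertices $\overline{c}\in\{0,1\}^Q$, the specific points $\overline{y}_{i\rightarrow 1/2}$ (each differing from $\overline{y}$ in one coordinate and sitting on $P_{\overline{y}}$) are the ones that witness the closest competing vertices, so that being strictly on the $\overline{y}$-side of $P_{\overline{y}}$ is enough to guarantee that $\overline{y}$ minimizes $\|\overline{c}-\Esp_{\overline{h}\sim\posterior}\overline{h}(x)\|^2$ over $\overline{c}\in\overline{Y}$.
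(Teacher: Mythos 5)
Your proposal is correct and follows essentially the same route as the paper, which likewise derives $\RP(\overline{\BQ}) \leq \Prob_{(\xbf,\overline{y})\sim D}\left(\overline{M_{\rho,2}}(\xbf,\overline{y}) \leq 0\right)$ from the geometric observation about the hyperplane $P_{\overline{y}}$ with normal $\overline{y}-\frac{1}{2}\,\textbf{1}$, and then applies the Cantelli--Chebyshev argument of Theorem~\ref{th:marge_omega}. The only difference is one of emphasis: you propose to spell out the step the paper dismisses as ``easy to see'' (that lying strictly on the $\overline{y}$-side of $P_{\overline{y}}$ forces $\overline{B_\posterior}(x)=\overline{y}$, using that $\Esp_{\overline{h}\sim\posterior}\overline{h}(x)\in[0,1]^Q$), which is a reasonable place to invest the writing effort.
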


\begin{figure}
	\centering
	\includegraphics[width=0.3\linewidth]{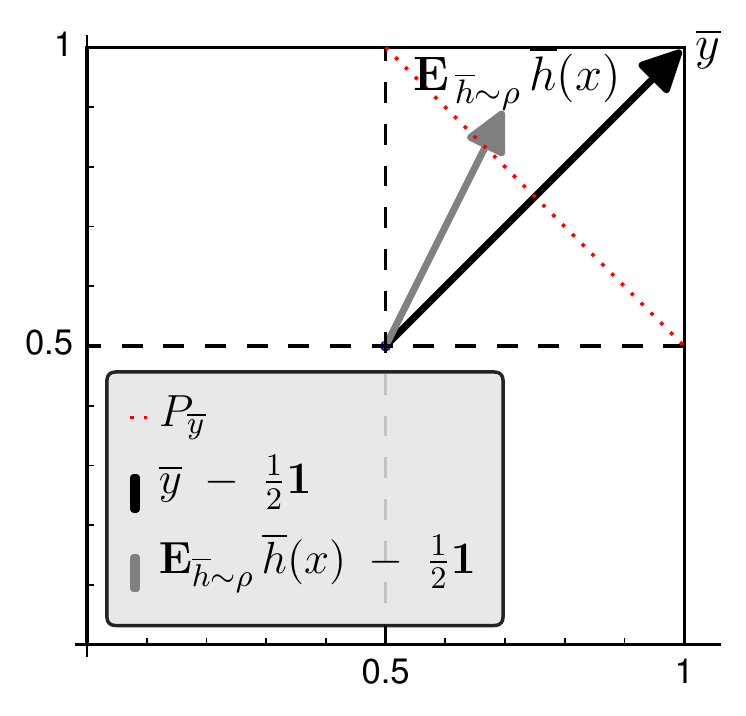}
	\caption{Graphical representation of label $\overline{y}$, hyperplane $P_{\overline{y}}$ and vector $\Esp_{\overline{h} \sim \rho} \overline{h}(x)$.}
	\label{fig:multilabel}
\end{figure}

\section{Conclusion and Outlooks}
\label{sec:conclu}
In this paper, we extend an important theoretical result in the PAC-Bayesian literature to the multiclass and multi-label settings. Concretely, we prove three multiclass versions and one multi-label version of the \Cbound, a bound over the risk of the majority vote, based on generalizations of the notion of margin for multiclass and multi-label classification. These results open the way to extending the theory to more complex outputs and developing new algorithms for multiclass and multi-label classification with PAC-Bayesian generalization guarantees.

\newpage

\bibliography{biblio}
\bibliographystyle{unsrt}

\end{document}